\documentclass{article}
\usepackage{spconf,amsmath,graphicx,hyperref,booktabs, amssymb, amsthm}

\newtheorem{theorem}{Theorem}[section]
\newtheorem{definition}{Definition}[section]

\newtheorem{lemma}[theorem]{Lemma}

\DeclareMathOperator*{\argmin}{arg\,min}

\DeclareMathOperator{\Var}{Var}
\DeclareMathOperator{\Corr}{Corr}

\title{Rho-Perfect: Correlation Ceiling For Subjective Evaluation Datasets}
\name{Fredrik Cumlin}
\address{School of Electrical Engineering and Computer Science, KTH Royal Institute of Technology, Sweden}
\begin{document}
%
\maketitle
\begin{abstract}
Subjective ratings contain inherent noise that limits the model-human correlation, but this reliability issue is rarely quantified. In this paper, we present $\rho$-Perfect, a practical estimation of the highest achievable correlation of a model on subjectively rated datasets. We define $\rho$-Perfect to be the correlation between a perfect predictor and human ratings, and derive an estimate of the value based on heteroscedastic noise scenarios, a common occurrence in subjectively rated datasets. We show that $\rho$-Perfect squared estimates test-retest correlation and use this to validate the estimate. We demonstrate the use of $\rho$-Perfect on a speech quality dataset and show how the measure can distinguish between model limitations and data quality issues.
\end{abstract}
\begin{keywords}
Subjective assessment, reliability measure, speech quality assessment, recommendation systems
\end{keywords}
\section{Introduction}

Developing objective emulators for subjective opinions is a rich research field with various applications. Examples include speech quality assessment \cite{sqa_review}, image aesthetics \cite{image_aesthetics_review}, and recommendation systems \cite{recommender_systems}. When developing these models, a fundamental question arises: what is the highest correlation any model can achieve with human ratings? Ratings contain inherent noise, which creates an upper bound on model-human correlation that is less than $1.0$. Quantifying this bound is both challenging and important for effective model development.

Despite the widespread use of subjective ratings in ML evaluation, the reliability of the ratings is often overlooked. Recent surveys on speech and image aesthetics \cite{sqa_review, image_aesthetics_review, mittag2022deep} extensively cover evaluation methodologies but omit reliability measures and correlation ceilings. Ignoring rating reliability might lead to misguided conclusions about model performance. For example, an objective model might perform well overall but poorly under certain conditions, but this could reflect poor rating reliability rather than model limitations.


Existing reliability measures such as Pearson's correlation ratio ($\eta^2$) \cite{Pearson1905}, the family of intraclass correlations (ICC) \cite{icc_measures}, and Cronbach's alpha \cite{cronbach1951coefficient} have limitations in this setting, since they assume homoscedastic noise (equal noise variance across items) \cite{lord1968statistical} or are difficult to interpret with respect to model performance. Subjectively rated datasets in, for example, speech quality assessment and recommendation systems typically have an uneven number of ratings per item, and different items can have larger or smaller disagreement among raters. Generalizability theory provides measures under heteroscedastic noise assumptions, but has a theoretical disposition and is not discussed with respect to model performance \cite{brennan2001, kim2007}, which might be the reason that they are not used in the context of model performance in these scenarios.

In this paper, we introduce $\rho$-Perfect, a practical upper bound of model-human correlation for subjectively rated datasets. The estimate can be calculated from a single evaluation knowing only the distribution of the individual ratings per item. Moreover, squaring $\rho$-Perfect approximates the correlation between two independent subjective evaluations, enabling empirical validation. In short, $\rho$-Perfect provides a principled way to interpret model performance relative to the limits of human ratings, enabling differentiation between model limitations and data quality issues. Our contributions are as follows: (1) $\rho$-Perfect, a practical upper bound on model-human correlation for unbalanced subjectively rated datasets; (2) a theoretical link to test-retest correlation and empirical validation thereof; and (3) case studies showing how $\rho$-Perfect estimates can separate model limitations from data quality.

\section{The $\rho$-Perfect metric}

Consider a subjectively rated dataset $\mathcal{D}$. It consists of items which we denote by $\mathcal{X}=\{x_{i}\}_{i=1}^n$ where $n$ is the number of items. It also consists of a set of $m\in\mathbb{N}$ human raters that provide subjective ratings on the items. Since human annotation is costly, the most common rating strategy is to have $m_{i}\ll m$ raters provide with ratings on item $x_i$. Thus, a subjectively rated dataset consists of pairs $(x_i, r_i^{(j)})$, where $x_i$ is an item and $r_i^{(j)}\in \mathbb{R}$ is a rating provided by rater $(j)$. We define the average rating to be given by $y_i=\frac{1}{m_i}\sum_{j=1}^{m_i}r_i^{(j)}$.

\subsection{Mathematical Derivation of $\rho$-Perfect}

We first formalize the notion of a perfect predictor, which serves as the basis for defining $\rho$-Perfect.

Let $X$ denote a random variable representing the items. Let $Y$ denote a random variable such that $Y\vert X=x_i$ represents the distribution of the average rating $y_i$.

We define the perfect predictor as \begin{equation}
    \hat{f} \triangleq \argmin_{f}\mathbb{E}[(f(X)-Y)^2].
\end{equation} It is a standard result that the minimizer is the regression function $\hat{f}(X) = \mathbb{E}[Y \vert X]$ \cite[Th. 1.4.1]{bickel2015mathematical}. For brevity, we denote $\hat{Y}\triangleq \hat{f}(X) = \mathbb{E}[Y \vert X]$.

We are interested in the Pearson Correlation Coefficient of the perfect predictor and the average ratings given. This can be viewed as a correlation ceiling. We have a Lemma.

\begin{lemma}
    Let $X,Y$ be two random variables and $\hat{Y}=\mathbb{E}[Y\vert X]$. Then the correlation of $\hat{Y}$ and $Y$ is given by \begin{equation}\label{eq:correlation}
    \begin{split}
        \textnormal{Corr}(Y, \hat{Y}) = \frac{\textnormal{Cov}(Y, \hat{Y})}{\sqrt{\textnormal{Var}(Y)\textnormal{Var}(\hat{Y})}} =\sqrt{\frac{\textnormal{Var}(\hat{Y})}{\textnormal{Var}(Y)}}.
    \end{split}
    \end{equation}
\end{lemma}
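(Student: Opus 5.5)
The plan is to show that $\textnormal{Cov}(Y,\hat{Y}) = \textnormal{Var}(\hat{Y})$; the stated formula then follows by direct substitution. We assume throughout that $Y$ is square integrable, so that all covariances are finite, and that $\textnormal{Var}(\hat{Y})>0$ and $\textnormal{Var}(Y)>0$, so that the correlation is defined.

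The key tool is the tower property of conditional expectation. First, $\mathbb{E}[\hat{Y}] = \mathbb{E}[\mathbb{E}[Y\vert X]] = \mathbb{E}[Y]$, so both variables share the same mean $\mu$. Next, compute $\mathbb{E}[Y\hat{Y}]$ by conditioning on $X$. Since $\hat{Y}$ is a function of $X$, it can be pulled out of the inner conditional expectation:
\begin{equation*}
\mathbb{E}[Y\hat{Y}] = \mathbb{E}\big[\mathbb{E}[Y\hat{Y}\vert X]\big] = \mathbb{E}\big[\hat{Y}\,\mathbb{E}[Y\vert X]\big] = \mathbb{E}[\hat{Y}^2].
\end{equation*}
Subtracting $\mu^2$ from both sides gives $\textnormal{Cov}(Y,\hat{Y}) = \mathbb{E}[\hat{Y}^2]-\mu^2 = \textnormal{Var}(\hat{Y})$.

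Substituting this into the definition of the Pearson correlation yields
\begin{equation*}
\textnormal{Corr}(Y,\hat{Y}) = \frac{\textnormal{Var}(\hat{Y})}{\sqrt{\textnormal{Var}(Y)\textnormal{Var}(\hat{Y})}} = \sqrt{\frac{\textnormal{Var}(\hat{Y})}{\textnormal{Var}(Y)}}.
\end{equation*}
The simplification in the last step divides by $\sqrt{\textnormal{Var}(\hat{Y})}$, which is why we need $\textnormal{Var}(\hat{Y})>0$.

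The only step needing care is the step that takes $\hat{Y}$ outside the conditional expectation ("taking out what is known"). This step needs $Y\hat{Y}$ to be integrable. That holds by Cauchy--Schwarz, because $\hat{Y}\in L^2$ by conditional Jensen. With that justified, the remaining steps are routine algebra.
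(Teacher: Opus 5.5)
Your proof is correct and follows the paper's argument exactly. Like the paper, you show $\textnormal{Cov}(Y,\hat{Y})=\textnormal{Var}(\hat{Y})$ via the tower property ($\mathbb{E}[\hat{Y}]=\mathbb{E}[Y]$ and $\mathbb{E}[Y\hat{Y}]=\mathbb{E}[\hat{Y}^2]$) and then substitute into the correlation. Your added hypotheses ($Y\in L^2$, positive variances) and the conditional Jensen plus Cauchy--Schwarz justification for taking $\hat{Y}$ out of the conditional expectation are sound refinements that the paper leaves implicit.
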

\begin{proof}
If we show that $\textnormal{Cov}(Y, \hat{Y})=\text{Var}(\hat{Y})$, the second equality in Eq.~\ref{eq:correlation} follows and we are done.

From the law of total expectation, we have $$\mathbb{E}[Y]=\mathbb{E}\left[\mathbb{E}[Y\vert X]\right]=\mathbb{E}[\hat{Y}].$$ Further, we have $$\mathbb{E}[\hat{Y}Y]=\mathbb{E}[\mathbb{E}[Y\hat{Y}\vert X]]=\mathbb{E}[\hat{Y}\mathbb{E}[Y\vert X]]=\mathbb{E}[\hat{Y}^2].$$ It now follows that $$\textnormal{Cov}(Y, \hat{Y})= \mathbb{E}[\hat{Y}Y]-\mathbb{E}[\hat{Y}]\mathbb{E}[Y]=\mathbb{E}[\hat{Y}^2] - \mathbb{E}[\hat{Y}]^2 = \text{Var}(\hat{Y}).$$
\end{proof}

Estimating $\textnormal{Var}(Y)$ can be done using the unbiased variance estimate given the data, and is given by \begin{equation}
    \textnormal{Var}(Y) = \frac{1}{n-1}\sum_{i=1}^n (y_i-\overline{y})^2
\end{equation}
where $\overline{y}=1/n \sum_i y_i$. Estimating $\textnormal{Var}(\hat{Y})$ is more difficult as neither the distribution nor the perfect predictor is accessible in closed form. We estimate it in the following way. From the law of total variance, conditioning on $X$, we have 
\begin{equation}
\begin{split}
\textnormal{Var}(Y) &= \mathbb{E}[\textnormal{Var}(Y\vert X)] + \textnormal{Var}(\mathbb{E}[Y\vert X]) \\
&= \mathbb{E}[\textnormal{Var}(Y\vert X)] + \textnormal{Var}(\hat{Y}),
\end{split}
\end{equation}
hence, $\textnormal{Var}(\hat{Y}) = \textnormal{Var}(Y) - \mathbb{E}[\textnormal{Var}(Y\vert X)]$. Now, if we assume that the raters are uncorrelated given an item, the variance of the average rating $Y\vert X=x_i$ can be estimated using the 
standard result for sample means; if individual ratings for item $i$ 
have sample variance $s_{\textnormal{rating}}^2=\frac{1}{m_i-1}\sum(r^{(j)}_i - y_i)^2$, then:
\begin{equation}
    \textnormal{Var}(Y\vert X=x_i) =\frac{s_{\textnormal{rating}}^2}{m_i} = \frac{1}{m_i(m_i-1)}\sum_{j=1}^{m_i} (r_i^{(j)}-y_i)^2.
\end{equation}

Therefore, 
\begin{equation}
    \mathbb{E}\left[\Var(Y\vert X)\right] = \frac{1}{n}\sum_{i=1}^n \frac{1}{m_i(m_i-1)}\sum_{j=1}^{m_i} (r_i^{(j)}-y_i)^2.
\end{equation}

We conclude with the definition of $\rho$-Perfect.

\begin{definition}[$\rho$\textnormal{-Perfect}]\label{def:rho_perfect}
    Given a subjectively rated dataset $\mathcal{D}=\left\{\left(x_i, r_i^{(j)}\right)\right\}$, the $\rho$-Perfect metric is given by 
    \begin{equation}\label{eq:def_rho_prefect}
            \rho\textnormal{-Perfect} \triangleq \sqrt{\frac{\Var(\hat{Y})}{\Var(Y)}}
    \end{equation}
    where $\text{Var}(\hat{Y})$ is the variance of a perfect predictor, and $\text{Var}(Y)$ is the variance of the average ratings per item. They are estimated by the following formulae:
    \begin{equation}
\begin{split}
            \Var(Y) &= \frac{1}{n-1}\sum_{i=1}^n(y_i - \overline{y})^2 \\
            \Var(\hat{Y})
            &= \Var(Y) - \frac{1}{n}\sum_{i=1}^n \frac{1}{m_i(m_i-1)}\sum_{j=1}^{m_i}(r_i^{(j)}-y_i)^2
\end{split}
\end{equation}
\end{definition}

Formally, the definition coincides with Pearson's correlation ratio ($\eta^2$) but is extended to a heteroscedastic noise scenario \cite{lord1968statistical}.

From the definition of $\rho$-Perfect, we suggest that there should be at least $50$ items (as it is correlation-based), and that each item has at least $3$ ratings (as the variance of the average is estimated). The computational cost to estimate $\rho$-Perfect is $\mathcal{O}(M)$, where $M=\sum_i m_i$ is the total number of ratings.\footnote{Implementation of $\rho$-Perfect can be found at \url{https://github.com/fcumlin/rho-perfect}.}

\section{Experimental validation}

Direct validation of $\rho$-Perfect is difficult since the distribution $Y\vert X$ is unknown. However, we can do indirect verification by the following. 

Assume we have done two subjective evaluations on the same items. We model the evaluations by $Y_1$ and $Y_2$. Now, we assume that both evaluations yield the same perfect predictor, which is reasonable if the subjective evaluations are done using a similar group of raters with similar biases. This means $\hat{Y}=\mathbb{E}[Y_1\vert X]=\mathbb{E}[Y_2\vert X].$ Now, we have
\begin{equation}
\begin{split} \textnormal{Cov}(Y_1, Y_2) &= \mathbb{E}[Y_1Y_2] - \mathbb{E}[Y_1]\mathbb{E}[Y_2] \\
&= \mathbb{E}[\mathbb{E}[Y_1Y_2\vert X]] -\mathbb{E}[\hat{Y}]^2 \\
&= \mathbb{E}[\textnormal{Cov}(Y_1, Y_2\vert X)+\mathbb{E}[Y_1\vert X]\mathbb{E}[Y_2\vert X]] - \mathbb{E}[\hat{Y}]^2 \\
&= \underbrace{\mathbb{E}[\textnormal{Cov}(Y_1, Y_2\vert X)]}_{\approx 0} + \mathbb{E}[\hat{Y}^2]-\mathbb{E}[\hat{Y}]^2 \\
&= \textnormal{Var}(\hat{Y})
\end{split}
\end{equation}

We justify $\mathbb{E}[\textnormal{Cov}(Y_1, Y_2\vert X)]\approx 0$ empirically in Section 3.1 and provide intuition here. Given item $X$, the ratings thereof are the inherent noise of subjectivity, which could be perceptual differences (different views of the rating scale), personal differences (individual taste and standards), or simply noise in the human judgment (variability by external factors, such as time of day, when last meal occurred, etc.). Assuming the ratings are provided independently, the noises thereof are also expected to be independent. Note that $\mathbb{E}[\textnormal{Cov}(Y_1, Y_2\vert X)]\approx 0$ is only needed to validate $\rho$-Perfect, not to calculate it.

If we assume that the two subjective evaluations are similar, we have $\textnormal{Var}(Y_1)\approx\textnormal{Var}(Y_2)$. It follows that 
\begin{equation}
    \begin{split}
        \textnormal{Corr}(Y_1, Y_2) = \frac{\textnormal{Var}(\hat{Y})}{\sqrt{\textnormal{Var}(Y_1)\textnormal{Var}(Y_2)}} 
        \approx \frac{\textnormal{Var}(\hat{Y})}{\textnormal{Var}(Y_1)} 
        = \rho\textnormal{-Perfect}^2
    \end{split}
\end{equation}

where $\rho\textnormal{-Perfect}^2$ is the squared value of the $\rho\textnormal{-Perfect}$ measure using only the first subjective evaluation. This means that $\rho\textnormal{-Perfect}^2$ is an estimate of the correlation between two similar subjective evaluations on the same items. Note that $\rho\textnormal{-Perfect}^2$ is measured using \textbf{only} the first evaluation.

We conclude that if $\mathbb{E}[\textnormal{Cov}(Y_1, Y_2\vert X)]\approx 0$, $\rho\textnormal{-Perfect}$ is an appropriate measure of the expected PCC of a perfect predictor on the data, if $\rho\textnormal{-Perfect}^2$ is equal to the PCC of two similar subjective evaluations on the same data.

\subsection{Validating $\rho$-Perfect on real datasets}

We now validate $\rho$-Perfect on real datasets. To the author's knowledge, there are no subjectively rated datasets publicly available that have been rated twice. Thus, we simulate two subjective assessments from one assessment. We propose two methods. The first method is to extract all the raters and split them into two equally sized sets. The scores from the raters in the first set constitute the first assessment, and the scores from the raters in the second set constitute the second assessment. We call this simulation \textbf{Split-Raters}. The other method is to split the ratings per item into two equally sized sets. We call this simulation \textbf{Split-Ratings}. Note that for the second method, some raters might have been provided with ratings in both assessments. However, the second method ensures that we have an equal number of ratings per item.

The datasets used are BVCC, MovieLens, SOMOS, and MERP.

\textbf{BVCC:} The BVCC dataset  consists of $4,974$ speech clips with exactly $8$ speech quality ratings per clip \cite{bvcc}. The ratings are given on a discrete scale from $1$ to $5$. 

\textbf{MovieLens:} The MovieLens dataset is a movie dataset that human raters have provided with recommendation ratings \cite{movielens}. It consists of $1,349$ movies with an average of $74$ ratings per movie; the least rated movie has $5$ ratings, and the most rated movie has $583$ ratings. The ratings are given on a discrete scale from $1$ to $5$.

\textbf{SOMOS:} The SOMOS dataset is a speech dataset that human raters have annotated based on speech quality \cite{SOMOS}. It consists of $20,100$ speech clips with an average of $18$ ratings per speech clip; the least rated clip has $17$ ratings, and the most rated clip has $146$ ratings. The ratings are given on a discrete scale from $1$ to $5$.

\textbf{MERP:} The MERP dataset is a music dataset that humans have annotated based on emotional experience \cite{merp}. In this experiment, we only extract the 'arousal' ratings, which are ratings based on the intensity of emotion, a continuous value from $-1$ to $1$. For a song, a rater provides one rating per second. We consider the average of these ratings to be the arousal rating of the song. There are $60$ songs in total, with an average of $57$ ratings per song; the least rated song has $6$ ratings, and the most rated song has $100$ ratings.

We do $10$ iterations with different seeds when splitting the raters/ratings, and report the mean and standard deviation. The results can be seen in Table~\ref{tab:experimental_validation}.

\begin{table}[htbp]
\centering
\small
\setlength{\tabcolsep}{4pt}
\begin{tabular}{lccc}
\toprule
Dataset & $\mathbb{E}[\textnormal{Cov}(Y_1, Y_2\mid X)]$ & $\rho\textnormal{-Perfect}^2$ & Target \\
 & & & Corr$(Y_1, Y_2)$ \\
\midrule
\multicolumn{4}{c}{Split-Raters} \\
\midrule
BVCC & $0.0^*$ & $0.798_{\scriptstyle\pm 0.001}$ & $0.801_{\scriptstyle\pm 0.001}$ \\
MovieLens & $0.0^*$ & $0.734_{\scriptstyle\pm 0.001}$ & $0.728_{\scriptstyle\pm 0.001}$ \\
SOMOS & $0.0^*$ & $0.258_{\scriptstyle\pm 0.002}$ & $0.297_{\scriptstyle\pm 0.001}$  \\
MERP & $0.0^*$ & $0.499_{\scriptstyle\pm 0.020}$ & $0.502_{\scriptstyle\pm 0.008}$ \\
\midrule
\multicolumn{4}{c}{Split-Ratings} \\
\midrule
BVCC & $0.0^*$ & $0.800_{\scriptstyle\pm 0.001}$ & $0.800_{\scriptstyle\pm 0.001}$ \\
MovieLens & $0.0^*$ & $0.710_{\scriptstyle\pm 0.001}$ & $0.701_{\scriptstyle\pm 0.001}$ \\
SOMOS & $0.0^*$ & $0.281_{\scriptstyle\pm 0.001}$ & $0.281_{\scriptstyle\pm 0.001}$  \\
MERP & $0.0^*$ & $0.478_{\scriptstyle\pm 0.009}$ & $0.502_{\scriptstyle\pm 0.007}$ \\
\bottomrule
\end{tabular}
\caption{Comparison of the squared $\rho$-Perfect measure of a subjectively rated dataset and the correlation of the subjectively rated dataset to a similar one. Validation of the $\rho$-Perfect metric is done by comparison of the squared value to the correlation. $^*$ All values below $10^{-18}$; effectively zero within numerical precision. }
\label{tab:experimental_validation}
\end{table}

\begin{table*}[t]
\centering
\caption{Comparison of $\rho$-Perfect with existing reliability metrics on subjective datasets}
\label{tab:comparison}
\begin{tabular}{lccccc}
\toprule
Dataset & $\Corr(Y_1, Y_2)$ & ICC(2, k) & Subsampling reliability & $\rho$-Perfect$^2$ \\
\midrule
BVCC & $0.801_{\scriptstyle\pm 0.001}$ & $0.822_{\scriptstyle\pm 0.001}$ & $0.893_{\scriptstyle\pm 0.001}$ & $0.796_{\scriptstyle\pm 0.001}$ \\
MovieLens & $0.728_{\scriptstyle\pm 0.001}$ & $0.898_{\scriptstyle\pm 0.001}$ & $0.879_{\scriptstyle\pm 0.001}$ & $0.719_{\scriptstyle\pm 0.001}$  \\
SOMOS & $0.297_{\scriptstyle\pm 0.002}$ & $0.326_{\scriptstyle\pm 0.001}$ & $0.716_{\scriptstyle\pm 0.001}$ & $0.269_{\scriptstyle\pm 0.001}$ \\
MERP & $0.502_{\scriptstyle\pm 0.010}$ & $0.554_{\scriptstyle\pm 0.001}$ & $0.807_{\scriptstyle\pm 0.001}$ & $0.483_{\scriptstyle\pm 0.011}$ \\
\bottomrule
\end{tabular}
\end{table*}

As can be seen in Table~\ref{tab:experimental_validation}, the term $\mathbb{E}[\textnormal{Cov}(Y_1, Y_2\mid X)]$ is effectively zero, hence justified by both empirical investigation and intuitive reasoning. This means that $\rho$-Perfect squared should be a good estimator of the correlation between two subjective ratings. In both split-raters and split-ratings methods, we find that $\rho$-Perfect squared is a suitable estimator of this correlation. This validates the use of $\rho$-Perfect as an estimator of the correlation between a subjectively rated dataset and a perfect predictor thereof; informally, the highest achievable correlation on that dataset.

\subsection{Comparison to existing measures}

We now compare with existing reliability measures. We report the test-retest experimental correlation of the split-raters methodology for comparison. Note that this means we compute the reliability of 'half' the datasets; only the first evaluation is used for reliability computation.

\textbf{ICC(2, k):} Following the guideline in Table 3 and Figure 1 in \cite{icc_guideline}, we implement ICC(2, k), which measures test-retest reliability of the average ratings ($k$ ratings per item) in a two-way random effects model with absolute agreement. The measure assumes that for an item, only a subset of the raters rate the item, and that there is an equal number of raters per item. The last assumption is violated for most of the datasets we test on, which $\rho$-Perfect is designed to address. 

\textbf{Subsampling reliability:} The subsampling reliability measure was used in \cite{MOSNet} and \cite{SOMOS} for measuring the reliability of ratings in the speech quality assessment task. It is done by randomly drawing half of the raters from the full rater pool and computing the correlation between the average ratings of the two sets. This is done over several iterations, and the average correlation is reported as the reliability.

As can be seen in Table~\ref{tab:comparison}, ICC(2, k) and $\rho$-Perfect$^2$ generally agree, except for MovieLens, where ICC(2, k) counterintuitively suggests higher reliability than BVCC despite lower rater test-retest correlation. A likely culprit is that MovieLens has a large, varying number of raters per clip, hence violating the assumptions of the ICC(2, k) measure. $\rho$-Perfect does not suffer from this limitation.

The subsampling reliability measure seems to consistently overestimate the reliability, but follows the overall trend of reliability when studying the test-retest correlation. A likely reason for the overestimation is that the ratings from the subset are already present in all the ratings, resulting in dependent realisations which does not reflect a truly independent evaluation. 

\section{Interpreting model performance with $\rho$-Perfect}

\begin{table}[!t]
\centering
\caption{Model performance relative to $\rho$-Perfect upper bounds on NISQA validation data and different subsets.}
\label{tab:model_performance}
\begin{tabular}{lcc}
\hline
Condition & Model PCC & $\rho$-Perfect \\
\hline
All data & $0.873$ & $0.954$ \\
Bandpass filtered & $0.934$ & $0.969$ \\
Clean conditions & $0.621$ & $0.816$  \\
Bursty distortions & $0.392$ & $0.701$ \\
\hline
\end{tabular}
\end{table}

To demonstrate practical utility, we evaluate the objective speech quality model DNSMOS Pro \cite{DNSMOSp} on the NISQA validation dataset \cite{NISQA}. DNSMOS Pro predicts the overall speech quality given a speech clip, and the NISQA dataset consists of $2500$ speech clips, each of which is rated by $5$ raters according to the overall quality. Extracting subsets of the dataset and computing model performance can give insights into problematic speech distortion scenarios for DNSMOS Pro.


The results are shown in Table~\ref{tab:model_performance}. The PCC on all data is $0.873$ with a $\rho$-Perfect of $0.954$, which suggests strong model performance and reliable ratings. However, the performance varies significantly for the different subsets presented. For bandpass filtered clips, the PCC is high, close to $\rho$-Perfect. For clean clips and high-burst distorted clips, the model PCC is significantly poorer ($0.621$ and $0.392$, respectively), and $\rho$-Perfect suggests that the reliability of the data is also lower. For the clean condition, part of the performance degradation can be explained by a lower $\rho$-Perfect, but this could also constitute an improvement area for DNSMOS Pro. For the high-burst scenario, while the PCC is the lowest ($0.392$), the moderate $\rho$-Perfect ($0.701$) indicates that both model improvements and potentially more reliable subjective evaluation will benefit a more accurate diagnosis.

This analysis demonstrates a practical use case of $\rho$-Perfect to distinguish between dataset limitations and model deficiencies. When model correlation drops, $\rho$-Perfect can aid in determining whether this reflects model weaknesses or limitations in subjective evaluation reliability.

\section{Conclusion}
In this paper, we have derived $\rho$-Perfect, an estimation of the model-human correlation ceiling. We have shown that $\rho$-Perfect squared approximates the correlation between two independently, but similar, subjective evaluations. We have shown that this theoretical result holds on real datasets across different subjectively rated datasets and different reliabilities of the ratings thereof. We have compared $\rho$-Perfect to other reliability measures and shown that $\rho$-Perfect has a clear interpretation. Finally, we have shown how to use $\rho$-Perfect in practice by comparing the measure to model-human correlation on the NISQA dataset.

\vfill\pagebreak

\section{Acknowledgement}
This work was partially supported by the Wallenberg AI, Autonomous Systems and Software Program (WASP) funded by the Knut and Alice Wallenberg Foundation.

\bibliographystyle{IEEEbib}
\bibliography{strings,refs}

\end{document}